\documentclass{sig-alternate-per-modified}

\usepackage[a4paper,margin=2cm]{geometry}
\usepackage[T1]{fontenc}
\usepackage[utf8]{inputenc}

\usepackage{url}
\usepackage{balance}
\usepackage{graphicx}
\usepackage{amsmath, amssymb}

\usepackage{algorithm}
\usepackage{algpseudocode}

\usepackage{mathtools}

\usepackage{amssymb}

\usepackage{dsfont}    

\usepackage{tikz}
\usetikzlibrary{arrows.meta,positioning}
\usetikzlibrary{positioning,calc}

\usepackage{mathtools, nccmath,amsmath,amssymb,amsfonts,amstext, dsfont, color}

\usepackage{tikz}
\usetikzlibrary{automata,positioning,arrows}

\usepackage{booktabs}

\usepackage{subcaption}
\usepackage{mathtools, stmaryrd}
\usepackage{xparse} 
\usepackage{mathtools, stmaryrd}
\usepackage{algorithm}
\usepackage{algorithmicx}
\usepackage{algpseudocode}
\usepackage{nicefrac}
\usepackage{bm}
\usepackage{hyperref}
\usepackage{multirow}

\usepackage{graphicx} 

\newtheorem{claim}{Claim}[section]
\newtheorem{theorem}{Theorem}[section]
\newtheorem{corollary}{Corollary}[section]

\newcommand{\LFU}{\texttt{LFU}}
\newcommand{\LRU}{\texttt{LRU}}
\newcommand{\LeCar}{\texttt{LeCaR}}
\newcommand{\MC}{\texttt{MC}}
\newcommand{\Hedge}{\texttt{Hedge}}
\newcommand{\HMC}{\texttt{H-MC}}
\newcommand{\Proba}[1]{\mathrm{Pr}\left(#1\right)}

\newcommand{\E}[1]{\mathbb{E}\left[#1\right]}

\def\b0{\mathbf{0}}

\def\cA{\mathcal{A}}

\def\cE{\mathcal{E}}
\def\cO{\mathcal{O}}

\def\P{\mathbb{P}}

\usepackage{xcolor}

\definecolor{lightgray}{gray}{0.9}

\usepackage{mdframed} 

\newmdtheoremenv[
  linecolor=black,
  backgroundcolor=lightgray,
  linewidth=1pt,
  innerleftmargin=5pt,
  innerrightmargin=5pt,
  innertopmargin=1pt,
  innerbottommargin=1pt
]{theoremSp}{Theorem}

\newmdtheoremenv[
  linecolor=black,
  backgroundcolor=lightgray,
  linewidth=1pt,
  innerleftmargin=5pt,
  innerrightmargin=5pt,
  innertopmargin=1pt,
  innerbottommargin=1pt
]{lemmaSp}{Lemma}

\newmdtheoremenv[
  linecolor=black,
  backgroundcolor=lightgray,
  linewidth=1pt,
  innerleftmargin=5pt,
  innerrightmargin=5pt,
  innertopmargin=1pt,
  innerbottommargin=1pt
]{corollarySp}{Corollary}

\newmdtheoremenv[
  linecolor=black,
  backgroundcolor=lightgray,
  linewidth=1pt,
  innerleftmargin=5pt,
  innerrightmargin=5pt,
  innertopmargin=1pt,
  innerbottommargin=1pt
]{claimSp}{Claim}
\begin{document}

\title{No-Regret Mixing of LRU and LFU with Optimal Switching Cost}

\numberofauthors{2}

\author{
\alignauthor
Younes Ben Mazziane%
\titlenote{\scriptsize Both authors contributed equally to this work.}\\
\affaddr{Université d'Avignon}
\alignauthor
Xinying Zou\raisebox{9pt}{$\ast$}\\
\affaddr{Link{\"o}ping University}
}


\maketitle

\begin{abstract}

Caching systems often rely on simple eviction policies such as Least Recently Used (\texttt{LRU}) and Least Frequently Used (\texttt{LFU}), which perform well in complementary request regimes. Recent policies such as \texttt{LeCar} and \texttt{Cacheus} combine \texttt{LRU} and \texttt{LFU} using ideas from the experts problem in online learning. Specifically, upon a miss, they randomize between the two eviction rules using probabilities derived from scores updated by tracking the history of past evictions. While these policies exhibit strong empirical performance, it remains unclear whether they are guaranteed, on every request sequence, to perform asymptotically as well as the better of \texttt{LRU} and \texttt{LFU}, i.e., whether they achieve sublinear regret with respect to this benchmark. We first show that \texttt{LeCar} suffers linear regret against an oblivious adversary, even with unbounded history. We then propose \texttt{H-MC}, a \texttt{Hedge}-based mixture of virtual \texttt{LRU} and \texttt{LFU} caches that preserves \texttt{Hedge}'s selection probabilities, and hence its regret guarantees, while minimizing the switching cost among all joint selection rules with these marginals.

\end{abstract}

\keywords{Caching, Expert's problem, Lazy online learning}

\section{Introduction}

Caching is widely used across computer systems, from improving CPU performance to enhancing user experience in content delivery networks. Its main goal is to decide which files should be stored locally so as to reduce the fraction of requests that cannot be served directly from the cache. \textit{Least Recently Used} ($\LRU$) and \textit{Least Frequently Used} ($\LFU$) are very popular caching policies. Upon a \textit{miss} (a request for an item not stored in the cache), $\LRU$ evicts the least recently requested item from the cache, whereas $\LFU$ evicts the least frequently requested one.

$\LRU$ and $\LFU$ are known to perform well in complementary regimes. For instance, $\LFU$ is asymptotically optimal when requests are drawn from a stationary distribution. In contrast, $\LRU$ is often more effective in non-stationary environments, where requests exhibit \textit{temporal locality}~\cite{traverso_temporal_2013}, such as when items are requested in bursts and may not be requested again afterward. Motivated by this complementarity, many subsequent caching policies have been proposed to adaptively combine recency-based and frequency-based decisions depending on the observed requests~\cite{donghee_lee_lrfu_2001,jiang_lirs_2002,megiddo_arc_2003,vietri_driving_2018,rodriguez_learning_2021}. Despite their strong empirical performance, these algorithms are not known to guarantee performance comparable to the better of $\LRU$ and $\LFU$ on arbitrary request sequences.

Worst-case guarantees for caching algorithms were first studied in the paging problem, where each miss triggers an eviction and hits leave the cache content unchanged. The standard metric is the competitive ratio, defined as the worst-case ratio between the miss count of the online policy and that of Belady's offline optimal algorithm. For cache capacity $C$, no deterministic online algorithm can achieve a competitive ratio smaller than~$C$, and several policies, including~$\LRU$ and~\texttt{FIFO}, attain this bound~\cite{boyar_relative_2007}. Thus, the competitive ratio can fail to distinguish algorithms with different practical behavior.

More recently, several caching policies based on online convex optimization~\cite{hazan_introduction_2016} have been proposed~\cite{paschos_learning_2019,bhattacharjee_fundamental_2020}. These policies are evaluated through regret, defined as the worst-case gap between the miss count of the algorithm and that of the optimal offline static caching policy, i.e., the cache storing the top-$C$ most requested items. Sublinear regret then guarantees that, asymptotically, the miss ratio of the policy approaches that of this optimal static cache. However, this benchmark can be weak, as it is frequency-based and therefore close in spirit to $\LFU$. 


The limitation is particularly clear for workloads with strong \textit{temporal locality}. For example, consider a cache of size $C=2$ and a trace composed of $n$ phases, where phase $i$ consists of $2r$ requests of the form $(a_i,b_i)^r$, with $a_i$ and $b_i$ being fresh items. Since every item appears exactly $r$ times, storing any two distinct items is optimal among static caches and incurs $2(n-1)r$ misses. In contrast, $\LRU$ incurs $2n$ misses, which is much smaller than the optimal static offline benchmark when $r$ is large. This shows that no-regret guarantees with respect to the optimal static cache do not necessarily imply good performance on recency-friendly workloads.

In this paper, we consider as performance metric, the worst-case gap, over all request sequences of length~$T$, between the number of misses incurred by an algorithm $\cA$ and the smaller miss count incurred by $\LRU$ or $\LFU$. We refer to this metric throughout the paper simply as \textit{regret}, and we denote it as~$\mathcal{R}_{T}^{\cA}$. The objective is then to design efficient caching policies with sublinear regret, i.e., $\mathcal{R}_{T}^{\cA}=o(T)$. Such policies are then guaranteed to adapt to both frequency and recency driven workloads.

A natural way to achieve sublinear regret is to formulate caching as an experts problem~\cite{cesa-bianchi_prediction_2006}, with $\LRU$ and $\LFU$ as experts. The policies $\LeCar$~\cite{vietri_driving_2018} and \texttt{Cacheus}~\cite{rodriguez_learning_2021} follow this principle: upon a miss, they probabilistically choose whether to evict according to~$\LRU$ or~$\LFU$. Their selection probabilities are inspired by $\Hedge$~\cite{cesa-bianchi_prediction_2006}; they maintain scores for the two eviction rules and sample from the corresponding softmax distribution. The scores are updated through a history of past evictions. When an evicted item is requested again, the rule responsible for its eviction is penalized. However, it remains unclear if such policies inherit $\Hedge$'s sublinear regret guarantees. We thus aim to address the following questions:  
\begin{center}
\begin{minipage}{0.9\linewidth}
\itshape
\begin{enumerate}
    \item Do existing online-learning inspired caching policies, such as $\LeCar$, have sublinear regret guarantees?

    \item If not, what are efficient alternative ways to combine $\LRU$ and $\LFU$ to achieve such guarantees?    
\end{enumerate}
\end{minipage}
\end{center}

\subsection*{Contributions}
This paper makes two main contributions. First, we prove that $\LeCar$ approach to combining $\LRU$ and $\LFU$ exhibits linear regret against an oblivious adversary, i.e., $\mathcal{R}_{T}^{\LeCar}=\Omega(T)$, even when the history size is unbounded. Second, we propose \texttt{H-MC}, a \texttt{Hedge}-based mixture of virtual \texttt{LRU} and \texttt{LFU} caches that preserves the \texttt{Hedge} selection probabilities, and hence its regret guarantees, while minimizing the switching cost among all couplings with these marginals.

 \medskip

\noindent \textbf{$\LeCar$ has linear regret.}
This result is first established in Theorem~\ref{thm:linear_regret_lecar} for cache capacity equal to~$2$, and is then extended to arbitrary cache capacities in Theorem~\ref{thm:lecar_linear_regret_general_C}. Establishing this negative result is not immediate. Standard adversarial traces, such as round-robin traces, are ineffective because $\LRU$ and $\LFU$ often recommend the same eviction. Moreover, on stationary or clearly recency friendly traces, eviction feedback tends to reveal the better rule, allowing $\LeCar$ to adapt. Beyond the choice of trace, the analysis is complicated by $\LeCar$'s time-varying selection probabilities: each randomized eviction can lead to a different future cache state, making the algorithm's evolution difficult to track. To handle this difficulty, we construct a periodic request sequence for which $\LRU$ and $\LFU$ incur the same number of misses in each period, but recommend storing different files. Using a negative-drift argument for the Markov chain governing the ratio of their selection probabilities, we show that $\LeCar$ repeatedly switches between the two rules, resulting in linear regret.


\medskip

\noindent \textbf{Caching with experts.}
Rather than using $\LRU$ and $\LFU$ as experts that recommend which item to evict, as in $\LeCar$, we use them as experts that recommend complete cache content. This yields a standard experts problem: at time $t$, $\Hedge$ (for example) outputs a probability $q_t$ of following the cache content of $\LRU$, and probability $1-q_t$ of following $\LFU$. Sampling the expert independently at each time step, however, can incur a large upload cost, since the selected expert may switch frequently. We therefore couple consecutive expert selections so as to preserve the marginals $q_t$ while minimizing the probability of switching. This is achieved by a maximal coupling between the decisions at times $t$ and $t+1$, and is optimal among all policies with the same marginal probabilities (see Theorem~\ref{thm:maximalcoupling}). We call the resulting coupling $\MC$. Combining $\MC$ with $\Hedge$ gives the policy $\HMC$, which has $\cO(\sqrt{T})$ regret and $\cO(\sqrt{T})$ worst-case switching cost, as shown in Corollary~\ref{cor:HMC_regret_switching_cost}.


\medskip 

\noindent \textbf{Outline.} The rest of the paper is organized as follows. Section~\ref{s:problem} formally defines the problem and describes~$\LeCar$, Section~\ref{s:Lecar_linear_regret} proves that $\LeCar$ suffers linear regret, Section~\ref{s:Caching_experts} introduces the proposed caching policy $\HMC$, and Section~\ref{s:conclusion} concludes the paper.


\section{Problem Formulation}
\label{s:problem}

We consider a server storing a set~$\mathcal{I}$ of $N$ files and a single cache memory, that can store up to~$C<N$ files from~$\mathcal{I}$. A sequence of requests for items in~$\mathcal{I}$ of length~$T$, denoted~$\mathbf{f} = (f_s)_{s\in [T]}$, arrives at the cache. If~$f_t$ is available in the cache at step~$t$, the request is a \textit{hit}, and the cache serves the request. Otherwise, it is a \textit{miss}, and the cache forwards the request to the server.

A caching policy~$\mathcal{A}$ decides the content of the cache at time~$t$, denoted~$S^{\mathcal{A}}_t$, based on past requests $f_1,\ldots ,f_{t-1}$. Algorithm~$\mathcal{A}$ samples~$S^{\mathcal{A}}_t$ from a probability distribution over the space~$\{S\subset \mathcal{I}: |S|=C\}$. Let the total miss count of a caching policy $\mathcal{A}$ until time~$T$ be
\begin{align}\label{Eqmisscount}
    M_{T}^{\mathcal{A}}\triangleq \sum_{t=1}^T \mathbf{1}\{f_t \notin S^{\mathcal{A}}_t\}.
\end{align}

\noindent One of the most popular caching policies are the Least Recently Used ($\LRU$) caching policy that evicts the least-recently used item upon a miss and the Least Frequently Used ($\LFU$) that evicts the least popular item among stored ones upon a miss. In this paper we aim to devise policies that perform as well as the best of $\LRU$ and $\LFU$ for any request sequence. Namely, policies with sublinear regret where the regret is defined as the gap between the miss count of an algorithm and that of the minimum of $\LRU$ and $\LFU$. We denote this metric as $\mathcal{R}_T^{\mathcal{A}}$ and it is formally defined as, 
\begin{align}
\label{e:regret}
\mathcal{R}_T^{\mathcal{A}} \triangleq \sup_{\bm{f}} \left( 
\E{M_{T}^{\mathcal{A}}}-\min\left\{M_{T}^{\LFU},\,M_{T}^{\LRU}\right\}\right),
\end{align}

\noindent \textbf{LeCar.} $\LeCar$~\cite{vietri_driving_2018} was proposed as a caching policy that uses online learning techniques to adaptively balance between $\LRU$ and $\LFU$ updates. The high level idea is that, upon a miss, $\LeCar$ randomize between $\LRU$ and $\LFU$ eviction rules using probabilities derived from scores updated by tracking the history of past evictions. Algorithm~\ref{alg:lecar_update}, in the appendix, describes the caching policy.

At each step~$t$, the algorithm maintains a cache content $S_t\subseteq \mathcal I$, a recency list $R_t$ ordering the cached items from least to most recently used, and LFU hash map $N_t$ for cached items, both of size $C$. In addition, for each expert $e\in\mathcal{E}=\{\LRU,\LFU\}$, the algorithm maintains a weight $w_t^e$ and a history $H_t^e$ of size at most $k$. The history $H_t^e$ is a Hashmap/dictionary that maps previously evicted items by $\LeCar$, on  recommendation by expert $e$, to their eviction times. For example, $H_t^{e}= \{(f,s)\}$ means that file $f$ was evicted by $\LeCar$ on a recommendation by expert~$e$ at step~$s$. We also write $H_{t}^{e}[f]=s$ and we use similar notation for $N_t$.    

At time $t$, the requested item is $f_t$. If $f_t\in S_{t-1}$, the request is a hit. The cache content, experts weights, and histories remain unchanged. The recency list is updated by moving $f_t$ to the most-recently-used position, and the LFU counter of $f_t$ is incremented.

If $f_t\notin S_{t-1}$, the request is a miss and an eviction decision is required. Before choosing the eviction, each expert~$e$ receives the decayed loss
\begin{equation}
    \ell_t^e=\mathds{1}\{f_t\in H_{t-1}^e\}
    d^{\,t-H_{t-1}^e[f_t]},
\end{equation}
where $d\in (0,1)$ is parameter of the algorithm. This penalizes expert $e$ if $f_t$ is in the history of expert $e$, $H_t^{e}$.
The expert weights are then updated as, $w_t^e
    =
    w_{t-1}^e \exp(-\eta \ell_t^e)$, and normalized into probabilities $p_t^e \propto w_t^e$. 

An expert~$E_t$ is sampled according to $\boldsymbol p_t = (p_t^e)_{e\in \mathcal{E}}$ and its eviction recommendation is followed. The $\LRU$ expert recommends the least recently used item, $v_t^{\LRU}=\operatorname{first}(R_{t-1})$, whereas the $\LFU$ expert recommends an item with minimum counter, $v_t^{\LFU} \in \arg\min_{i\in S_{t-1}} N_{t-1}[i]$, with ties broken according to the LRU order. The cache is then updated so that the recommended item for eviction $v_t^{E_t}$ is removed from the cache and the requested item is added instead. 

The recency list removes the evicted item $v_t^{E_t}$ and inserts $f_t$ in the most-recently-used position. Similarly, the counts Hashmap $N_t$ removes $v_t^{E_t}$'s entry and adds $f_t$ with count $1$. Finally, the requested item is removed from both histories, since it is now cached, and the item evicted by the selected expert is inserted into the corresponding history with timestamp $t$. If this history exceeds size $k$, its oldest entry is discarded.

\section{LeCar has linear regret}
\label{s:Lecar_linear_regret}

$\LeCar$ was empirically shown to remain competitive with both $\LRU$ and $\LFU$ on traces that alternate between recency-friendly and frequency-friendly phases~\cite{vietri_driving_2018}. Theorem~\ref{thm:linear_regret_lecar} shows that this is not guaranteed when the separation between the two regimes is less clear. In particular, it constructs a trace on which $\LRU$ and $\LFU$ have comparable performance, \textit{at every time}, while maintaining different cache contents. In this case, $\LeCar$ keeps switching between the two eviction rules, causing it to perform worse than both $\LRU$ and $\LFU$.

\begin{theorem}\label{thm:linear_regret_lecar}
    $\LeCar$ has linear regret, i.e., $\mathcal{R}_{T}^{\LeCar} = \Omega(T)$ when the cache capacity $C=2$ and for any histories length $k\geq1$. 
\end{theorem}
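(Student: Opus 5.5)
The plan is to build, for $C=2$, an explicit periodic request sequence on a small universe of files with the following properties. First, $\LRU$ and $\LFU$ each incur a bounded number of misses per period, say $m$ each. Second, the two rules hold different contents at the moments that matter. Third, whenever $\LeCar$'s cache is in a ``mixed'' state (one file agreeing with $\LRU$'s recommendation, one with $\LFU$'s), or whenever it follows the wrong rule at a critical miss, it pays at least $m+\delta$ misses in that period for some constant $\delta>0$. A natural starting point is to interleave a frequency-heavy pair of ``popular'' items $a,b$ with a recency pattern of fresh or rarely repeated items $x,y$. The counts should make $\LFU$ keep the popular item while $\LRU$ keeps the recent one, and the two rules should lose equally per period. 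I would first verify the behaviour of $\LRU$ and $\LFU$ on one period by hand, checking that both cache states return to the same configuration (up to relabeling) at the period's end. This gives $\min\{M_T^{\LFU},M_T^{\LRU}\}=mT/L+O(1)$, where $L$ is the period length.

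Next I would analyse $\LeCar$ period by period. I would define the state at the start of period $j$ as $\LeCar$'s cache content, relative to the $\LRU$/$\LFU$ contents, together with the log-ratio $Z_j=\log(w^{\LRU}/w^{\LFU})$. The key observation I would aim for is that the losses $\ell_t^e$ are triggered only when an item evicted by rule $e$ is requested again. On this trace, every eviction by either rule is punished within the period by a comparable decayed amount, since the evicted item is requested soon after, within the history window even for $k=1$. So the increments of $Z_j$ have bounded magnitude. I would then show that they have negative drift toward a neighbourhood of $0$ whenever $|Z_j|$ is large: the rule that is currently favoured is chosen more often, so it is penalized more often. I would use a Foster--Lyapunov style negative-drift argument on $|Z_j|$ to conclude that the Markov chain $(Z_j,\text{cache state})$ spends a positive fraction of periods in a compact set $\{|Z_j|\le B\}$. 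There both selection probabilities are bounded below by some $p_0>0$.

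Finally, in every period that starts with $|Z_j|\le B$, the probability that $\LeCar$ makes at least one ``wrong'' eviction is at least a constant. Here a wrong eviction means one that mixes the two contents and costs an extra miss relative to both experts. I would also show that in every period $\LeCar$ incurs at least $m$ misses regardless of choices. Summing the expected excess gives $\E{M_T^{\LeCar}}-\min\{M_T^{\LFU},M_T^{\LRU}\}\ge c\,T$, that is, $\Omega(T)$ regret, for every $k\ge1$. Since the punished items recur within distance $1$ in eviction order, the history length does not change the argument.

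I expect the main obstacle to be the design of the trace itself together with the drift step. The trace must simultaneously equalize the per-period losses of $\LRU$ and $\LFU$, force $\LeCar$'s randomization to hurt in both directions, and make the decayed loss feedback symmetric enough for $Z_j$ to be recurrent rather than drifting to $\pm\infty$. If $Z_j$ escaped to infinity, $\LeCar$ would lock onto one expert and achieve sublinear regret. I would first try a short trace, period about $6$--$8$ requests on $4$ items, and enumerate $\LeCar$'s possible cache states by hand. I would then compute the expected change of $Z_j$ in each state, aiming to show $\E{Z_{j+1}-Z_j\mid Z_j}\le -\epsilon$ when $Z_j>B$, and symmetrically when $Z_j<-B$.
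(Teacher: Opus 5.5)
Your architecture matches the paper's: a periodic trace on which $\LRU$ and $\LFU$ tie in every period, a drift argument that keeps the log weight ratio in a compact set, and a constant expected excess miss in each period where it stays there. The gap is that the substance of the theorem is the trace itself, and you leave it unbuilt; you call it the main obstacle yourself.

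Your concrete suggestion, a short period on about four fixed items, also points away from what makes the analysis possible. On a fixed small universe, items evicted in one period are requested again in later periods. Stale history entries then add penalties you have not accounted for, and how many of them survive depends on $k$. Your remark that punished items recur within distance $1$ only guarantees the \emph{intended} penalties when $k=1$; it does not exclude extra penalties when $k$ is large. Likewise, $\LeCar$'s histories, and possibly its cache and eviction-reset frequency counters, depend at a period boundary on earlier coin flips. So $Z_j$ is not Markov on its own, and you would need your drift bound uniformly over an auxiliary state that you have no handle on. Finally, the heuristic ``the favoured rule is chosen more often, hence penalized more often'' does not settle the sign of the drift by itself. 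The sign depends on which evictions are later requested, and after how many steps.

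The missing idea is the one your mention of fresh items and of states agreeing ``up to relabeling'' hints at but never exploits. Use one persistent item $a$ plus items $b_i,c_i$ that occur only in phase $i$, with phase $(a,a,b_i,c_i,b_i,a)$ after the prefix $(b_0,a,a)$. Then $\LRU$ and $\LFU$ each miss three times per phase. $\LeCar$ enters phase $i$ in $\{a,b_{i-1}\}$ and leaves it in $\{a,b_i\}$ along each of its three possible paths ($\LRU$ at $c_i$; $\LFU$ then $\LRU$; $\LFU$ then $\LFU$), and only never-again-requested items remain in the histories. Hence, for every $k$, $\rho_i=w^{\LRU}/w^{\LFU}$ is an autonomous Markov chain moving to $\mu_2\rho_i$, $\rho_i$ or $\rho_i/\mu_1$ with explicit probabilities. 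The drift of $\ln\rho_i$ tends to $\eta d>0$ as $\rho_i\to 0$ and to $-\eta d^2<0$ as $\rho_i\to\infty$. The mixed path $\LFU$-then-$\LRU$ costs a fourth miss, with probability $\rho_i/((1+\rho_i)(\mu_1+\rho_i))$.

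Once you have this reduction, your Ces\`aro-type conclusion is a valid and more elementary substitute for the paper's appeal to Pemantle--Rosenthal. With bounded increments and drift at most $-\epsilon$ whenever $|Z_j|>B$, telescoping $\E{|Z_n|}\ge 0$ gives $\sum_{j<n}\Proba{|Z_j|\le B}\ge cn-O(1)$, which already yields $\Omega(T)$ regret.
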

\begin{proof}
The request sequence starts with $(b_0,a,a)$. After that, the request occurs in periodic way, each with $6$ requests. In each phase~$i$, the request sequence is
\begin{align}
(a,a,b_i, c_i, b_i,a),
\end{align}
where $b_i$ and $c_i$ are requests for~$2$ distinct items that only appear at phase $i$. Without loss of generality, we assume that $T-3$ is a multiple of $6$.  

\noindent \textbf{Miss count of $\LFU$.}
Initially, $\LFU$ incurs two misses and stores $\{a,b_0\}$. In phase $1$,
\begin{equation}
\{a,b_0\}
\xrightarrow{a,a,b_1}
\{a,b_1\}
\xrightarrow{c_1}
\{a,c_1\}
\xrightarrow{b_1,a}
\{a,b_1\}.
\end{equation}
and $\LFU$ incurs three misses. Hence, at the end of the phase, it again stores $a$ and one phase item, namely $b_1$. The same argument applies inductively to every phase. Therefore, $M_T^{\LFU}=2+(T-3)/2$.

\medskip 

\noindent \textbf{Miss count of $\LRU$.} The same reasoning applies to $\LRU$. In phase $1$,
\begin{equation}
\{a,b_0\}
\xrightarrow{a,a,b_1}
\{a,b_1\}
\xrightarrow{c_1}
\{b_1,c_1\}
\xrightarrow{b_1,a}
\{a,b_1\},
\end{equation}
and $\LRU$ also incurs three misses. Thus, at the end of each phase, it stores $a$ and the corresponding phase item. By induction, $M_T^{\LRU}=2+(T-3)/2$.
\medskip 

\noindent \textbf{Miss count of $\LeCar$.} Define $\tau_i(j)$ as the time index of request number $j\in [6]$ in phase $i$, i.e., $\tau_i(j) = 3 + 6(i-1)+ j$. Assume that the phase starts from $S^{\LeCar}=\{a,b_{i-1}\}$. The first two requests to $a$ are hits, while the request to the fresh item $b_i$ evicts $b_{i-1}$, so the cache becomes $\{a,b_i\}$. The only nontrivial branching occurs after the request to $c_i$, and possibly after the following request to $b_i$, depending on which expert is selected for eviction. In all possible branches, however, the phase ends again with $S^{\LeCar}=\{a,b_i\}$. We denote the three possible paths during phase~$i$ by
\[
\begin{aligned}
A_i^{1}
&\triangleq \{E_{\tau_i(4)}=\LRU\},\\
 A_i^{2}
&\triangleq \{(E_{\tau_i(4)},E_{\tau_i(5)})=(\LFU,\LRU)\},\\
A_i^{3}
&\triangleq \{(E_{\tau_i(4)},E_{\tau_i(5)})=(\LFU,\LFU)\},
\end{aligned}
\]
where $E_t$ designates the expert selected by $\LeCar$ at step $t$. These three events describe all possible evolutions of $S^{\LeCar}$ inside phase~$i$, as shown in~\eqref{diagram:cache_evolution_LeCar}. 
\begin{equation}\label{diagram:cache_evolution_LeCar}
\begin{aligned}
&\{a,b_{i-1}\}
\xrightarrow{a,a,b_i}
\{a,b_i\}\\
&\xrightarrow{c_i} 
\begin{cases}
\{b_i,c_i\}\xrightarrow{b_i}\{b_i,c_i\}\xrightarrow{a}\{a,b_i\},
& \text{on } A_i^{1},\\
\{a,c_i\}\xrightarrow{b_i}\{b_i,c_i\}\xrightarrow{a}\{a,b_i\},
& \text{on } A_i^{2},\\
\{a,c_i\}\xrightarrow{b_i}\{a,b_i\}\xrightarrow{a}\{a,b_i\},
& \text{on } A_i^{3}.
\end{cases}
\end{aligned}
\end{equation}

Although the quantity $S^{\LeCar}$ exhibits a periodic pattern per phase, the probabilities of the three paths in each phase, depend on the evolution of the expert weights. We therefore track the relative weight of $\LRU$ with respect to $\LFU$. For $i\ge 1$ and $j\in[6]$, define
\begin{align}
\rho_i(j)
\triangleq
\frac{w_{\LRU}(\tau_i(j))}{w_{\LFU}(\tau_i(j))}.
\end{align}
We also write $\rho_i\triangleq \rho_i(1)$ for the value of this ratio at the beginning of phase~$i$. Conditioned on $\rho_i(j)=\rho$, the probability that $\LeCar$ selects $\LRU$ at an eviction time is
\begin{align}
\Proba{E_{\tau_i(j)}=\LRU \mid \rho_i(j)=\rho}
=
\frac{\rho}{1+\rho}.
\end{align}

\noindent Let $\mu_1=\exp(-\eta d)$, and $\mu_2=\exp(-\eta d^2)$. Conditioned on $\rho_i=\rho$, the ratio $\rho_i(j)$ changes only at the steps where one of the experts is penalized. Along the three path events $\mathcal A_i^1,\mathcal A_i^2,\mathcal A_i^3$, its evolution is
\begin{align}\label{e:rho_dynamics}
\rho
\xrightarrow{a,a,b_i,c_i,b_i}
\begin{cases}
\rho
\xrightarrow{a} \mu_2\rho,                             & \text{on }  A_i^1,\\
\frac{\rho}{\mu_1} \xrightarrow{a} \rho,               & \text{on }  A_i^2,\\
\frac{\rho}{\mu_1} \xrightarrow{a} \frac{\rho}{\mu_1}, & \text{on }  A_i^3.
\end{cases}
\end{align}
Therefore,
\begin{align}\nonumber
\Proba{ A_i^1\mid \rho_i}
&= \frac{\rho_i}{1+\rho_i},\\ \nonumber 
\Proba{A_i^2\mid \rho_i}
&= \frac{\rho_i}{(1+\rho_i)(\mu_1+\rho_i)},\\ \label{e:transition_proba_rho}
\Proba{\mathcal A_i^3\mid \rho_i}
&= \frac{\mu_1}{(1+\rho_i)(\mu_1+\rho_i)}.
\end{align}
Note that, even when the history sizes are unbounded, the histories at the end of each phase contain only items that will never be requested again. Therefore, these items do not incur any future penalties, regardless of which history contains which item.

\noindent We denote the expected number of misses per phase~$i$ conditioned on~$\rho_i$ as $m(i|\rho)$:
\begin{align}
        m(i|\rho) = \E{\sum_{j=1}^{6} \mathds{1}\left( f_{\tau_i(j)} \notin S_{\tau_i(j)}^{\LeCar} \right) \mid \rho_i=\rho}
\end{align}
From \eqref{diagram:cache_evolution_LeCar}, $\LeCar$ incurs three misses on $A_i^1$ and $A_i^3$, and four misses on $A_i^2$. Hence, 
\begin{align}
     m(i|\rho) = 3 + \frac{1}{1+\rho} \frac{\rho}{\mu_1 + \rho}
\end{align}
The excess miss probability with respect to $\LRU$ and $\LFU$ in one phase is thus $\frac{1}{1+\rho}\frac{\rho}{\mu_1+\rho}$. This quantity is small only when $\rho$ is close to $0$ or very large, which corresponds to $\LeCar$ selecting essentially only one of the two
experts. Hence, to prove linear regret, it is enough to show that the weight ratio does not escape to either extreme. More precisely, suppose that there exist constants $0<\rho_-<\rho_+<\infty$ and $\kappa >0$, such that for every phase $i$, 

\begin{align}\label{e:sandwich_rho}
\Proba{\rho_i\in[\rho_-,\rho_+]}\ge \kappa.
\end{align}
Then for any phase $i$, 
\begin{align}
\E{m(i|\rho)} \ge
3+
\kappa
\min_{\rho\in[\rho_-,\rho_+]}
\left\{
\frac{1}{1+\rho}\frac{\rho}{\mu_1+\rho}
\right\}.
\end{align}
The minimum is strictly positive because $[\rho_-,\rho_+]$ is a compact subset of $(0,\infty)$. Therefore, $\LeCar$ incurs a constant excess number of misses per phase compared with $\LRU$ and $\LFU$, which implies $\mathcal R_T^{\LeCar}=\Omega(T)$. Next we prove~\eqref{e:sandwich_rho}. 

\medskip

\noindent \textbf{Proof of~\eqref{e:sandwich_rho}.} Combining \eqref{e:rho_dynamics} and \eqref{e:transition_proba_rho} yields that $\rho_i$ is Markov process with the following transition probabilities,
\begin{align}
\rho_{i+1}=
\begin{cases}
\mu_2\rho_i, 
& \text{w.p. } \displaystyle \frac{\rho_i}{1+\rho_i},\\
\rho_i, 
& \text{w.p. } \displaystyle 
\frac{1}{1+\rho_i}\frac{\rho_i}{\rho_i+\mu_1},\\[3mm]
\frac{\rho_i}{\mu_1}, 
& \text{w.p. } \displaystyle 
\frac{1}{1+\rho_i}\frac{\mu_1}{\rho_i+\mu_1}.
\end{cases}
\end{align}

\noindent Let $\gamma_i \triangleq \ln(\rho_i)$ and consider the following conditional expected drift, denoted as $\Delta(\rho)$, and defined as,
\begin{align}
      \Delta(\rho) \triangleq \E{\gamma_{i+1} - \gamma_{i}|\gamma_{i} = \ln(\rho)}.
\end{align}
Define $\beta_1 = \ln(\mu_1)$ and $\beta_2=\ln(\mu_2)$. Direct computations yields,
\begin{align}
      \Delta(\rho) 
&=  \frac{\rho}{1+\rho} \cdot \beta_2 - \frac{1}
    {1+\rho} \cdot \frac{\mu_1}{\mu_1 + \rho} \cdot \beta_1\\
    &=\frac{\beta_2}{(1+\rho)(\mu_1+\rho)}\cdot \left( \rho^2 + \mu_1 \rho - \mu_1 \cdot \frac{\beta_1}{\beta_2}\right).
\end{align}


\noindent Note that $\beta_1= -\eta d$ and $\beta_2 = -\eta d^{2}$ with $d\in (0,1)$, and thus $\beta_2<0$ and $\beta_1/\beta_2= 1/d>1$. Thus, the roots of the quadratic function $\rho^2+\mu_1\rho- \mu_1\frac{\beta_1}{\beta_2}$ are $\frac{-\mu_1\pm\sqrt{\mu_1^2+ 4\mu_1\frac{\beta_1}{\beta_2}}}{2}$. And there exists $\delta>0$, and $0<r_1<r_2$ such that, 
\begin{align}
      \forall \rho \in (0,r_1), \;  \Delta(\rho) > \delta, \text{ and } \forall \rho \in (r_2,+\infty), \;    \Delta(\rho) < - \delta.   
\end{align}
Given that $\gamma_i$ has bounded increments, i.e., $\gamma_{i+1}-\gamma_i
    \in \{\beta_2,0,-\beta_1\}$, \cite[Thm. 1]{pemantle_moment_1999} implies that there exists a constant~$L>0$ such that,  
\begin{align}
    \sup_{i\geq 1} \E{|\gamma_i|} \leq L.
\end{align}
This permits to use Markov inequality to prove that that there exists $\kappa>0$ such that $\gamma_i$ is bounded in a finite segment with a probability at least $\kappa$. This is equivalent to proving~\eqref{e:sandwich_rho} and finishes the proof.
\end{proof}

\begin{theorem}\label{thm:lecar_linear_regret_general_C}
    The statement of Theorem~\ref{thm:linear_regret_lecar} also holds for any cache size $C\geq 2$.
\end{theorem}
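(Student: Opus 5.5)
The plan is to reduce the general case to the $C=2$ construction of Theorem~\ref{thm:linear_regret_lecar} by padding the cache with $C-2$ ``anchor'' items. These items are requested so often that, under $\LRU$, $\LFU$ and $\LeCar$ alike, they are never evicted. Let $g_1,\dots,g_{C-2}$ be fresh items. The trace starts with a warm-up prefix in which each $g_l$ is requested many times, say $M$ times with $M$ larger than any count that $a$ will reach by the relevant time. We then follow with $(b_0,a,a)$ and the periodic phases. Inside each phase we interleave requests to all $g_l$, placed immediately before each of the six original requests, so that recency alone never makes a $g_l$ the least recently used item at a miss.

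The first step is to check that anchors are always hits after warm-up and never recommended for eviction. This is where the construction must be tuned. For $\LFU$, the counts of the $g_l$ grow by a constant per request of the base sequence. This is at least the growth rate of $a$, which gets 3 requests per phase, provided each $g_l$ is requested at least 3 times per phase. Together with the initial lead from the warm-up, this keeps them strictly above every other cached item, so $\LFU$ never picks them. One subtlety is that $a$'s count also keeps growing, which is why the anchor's per-phase increment must dominate. For $\LRU$, inserting a full block $g_1,\dots,g_{C-2}$ right before every base request makes every anchor more recent than whichever item was requested before the preceding base request. The least recently used item among the cache is then always a non-anchor. I need to verify this at each miss: at a miss for $f_t$, the two non-anchor cached items were last requested before the latest anchor block, so one of them is the LRU.

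Since $\LeCar$'s recommendations are exactly $\LRU$'s and $\LFU$'s recommendations computed on its own cache, anchors are never evicted by $\LeCar$ either. They are never in the histories, so they never generate losses. The restriction of all three systems to the two free slots then evolves exactly as in the $C=2$ analysis on the subsequence $(b_0,a,a,(a,a,b_i,c_i,b_i,a)_i)$. The three per-phase paths $A_i^1,A_i^2,A_i^3$, the miss counts, the decayed losses and the Markov chain for $\rho_i$ all carry over unchanged. The one check needed is the losses: the elapsed times $t-H[f]$ change once the anchor requests are inserted, so $\mu_1,\mu_2$ become $\exp(-\eta d^{s_1})$ and $\exp(-\eta d^{s_2})$ for new gaps $s_1<s_2$.

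The main obstacle is to confirm that the drift analysis still works with these new gaps, that is, that $\beta_1/\beta_2=d^{s_1-s_2}>1$ still holds. This requires the $b_i$ penalty gap to be shorter than the $a$ penalty gap, which holds because the anchor blocks add the same number of steps per base request, so the ordering of the gaps is preserved. With that, $\Delta(\rho)$ again has positive drift near $0$ and negative drift near $\infty$. The increments are still bounded, so \cite[Thm. 1]{pemantle_moment_1999} again yields~\eqref{e:sandwich_rho}.

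Each phase therefore has a constant expected excess over $\LRU$ and $\LFU$, which have equal miss counts. The phase length is now $6(C-1)$, a constant for fixed $C$. The warm-up has fixed length, so it contributes only $O(1)$. Together these give $\mathcal R_T^{\LeCar}=\Omega(T)$.
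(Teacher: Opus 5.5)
Your proposal is correct and is essentially the paper's proof. The paper also pads with $C-2$ persistent items interleaved so that they are always hits and never recommended for eviction, so the two free slots replay the $C=2$ dynamics with only $\mu_1,\mu_2$ changed; it inserts the block only before the four requests that can miss, which gives the same gaps $C-1$ and $2(C-1)$ as your six-block version. Two minor remarks: the drift argument only needs $\mu_1,\mu_2\in(0,1)$, so the condition $\beta_1/\beta_2>1$ you flag as the main obstacle is not actually required; and a constant-length warm-up suffices, since at every miss one of the two non-anchor items has count at most $2$, so there is no need to compare anchors with $a$'s growing count.
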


\begin{proof}
    The proof follows the same argument as in the case $C=2$, by adding $C-2$ persistent items that remain permanently in the cache without affecting the dynamics of the two remaining cache slots. Let $G=(g_1,\ldots,g_{C-2})$ be a sequence of $C-2$ distinct items. We modify the request sequence in the proof of Thm.~\ref{thm:linear_regret_lecar} so that the initial requests are $G,\,G,\,b_0,\,a,\,a$, ensuring that $S^{\LeCar}= \{ a, b_0, G\}$ at the beginning. Then in each phase $i$, the request sequence is,
\begin{align}
 (a,a,G,b_i,G,c_i,G,b_i,G,a).
\end{align}

The cache-state evolution on the two non persistent slots has the same structure as in the proof of Theorem~\ref{thm:linear_regret_lecar}. The only change is that the elapsed times between an eviction and the corresponding future request are larger because of the intervening requests to $G$. This only replaces $\mu_1$ and $\mu_2$ by constants $\mu_1^{(C)},\mu_2^{(C)}\in(0,1)$, and the same drift argument applies. 
\end{proof}

\section{Caching with Experts}
\label{s:Caching_experts}

To obtain a caching policy with sublinear regret guarantees, in the sense of~\eqref{e:regret}, we cast the problem as an experts problem. The set of experts is $\cE = \{\LRU,\LFU\}$. At each time step~$t$, each expert recommends the cache state that would be maintained by its corresponding virtual cache, namely $S_t^{\LRU}$ and $S_t^{\LFU}$. The loss of expert $e\in\cE$ at time~$t$ is defined as $\ell_t^e = \mathds{1}\{f_t \notin S_t^e\}$. At each step $t$, the algorithm $\cA$ selects an expert $E_t^{\cA}$. We denote its probability of selecting $\LRU$ as $q_t^{\cA} \triangleq \Proba{S_t^{\cA}=S_t^{\LRU}}$. The expected loss of~$\cA$ at time~$t$ is therefore linear in its decision,
\begin{align}
    \ell_t(\cA) = q_t^{\cA}\ell_t^{\LRU} + \left(1-q_t^{\cA}\right)\ell_t^{\LFU}.
\end{align}
With this choice of experts, decision set, and loss function, the regret of the resulting online decision problem coincides with the regret defined in~\eqref{e:regret}. Applying the Hedge algorithm in this setting yields
\begin{align}\label{e:hedge_update}
    &S_t^{\texttt{Hedge}}
    =
    \begin{cases}
        S_t^{\LRU}, & \text{w.p. }
        \psi\left(M_{t-1}^{\LRU}-M_{t-1}^{\LFU}\right), \\
        S_t^{\LFU}, & \text{otherwise,}
    \end{cases}
\end{align}
where $\psi(x) \triangleq \frac{1}{1+\exp(\eta x)}$. Equivalently, $q_t^{\texttt{Hedge}}= \psi\left(\Delta_t\right)$, where $\Delta_t=M_{t-1}^{\LRU}-M_{t-1}^{\LFU}$, and $M_{t-1}^{\LRU}$ and $M_{t-1}^{\LFU}$ denote the miss count of $\LRU$ and $\LFU$ over the first $t-1$ requests. Taking $\eta=\cO(1/\sqrt{T})$ yields asymptotically optimal sublinear regret: $\mathcal{R}_T^{\texttt{Hedge}}= \cO(\sqrt{T})$. Appendix~\ref{app:Hedge} provides more details about the \texttt{Hedge} algorithm in general.

Despite its optimal regret guarantee, using \texttt{Hedge} to mix $\LRU$ and $\LFU$ may incur a large upload cost. Whenever $E_{t+1}^{\cA}\neq E_t^{\cA}$, the algorithm may need to replace its current cache state with that of the newly selected policy. In the worst case, the two virtual caches are disjoint, requiring the upload of as many as~$C$ files. By contrast, when $E_{t+1}^{\cA}=E_t^{\cA}$, the cache performs a standard $\LRU$ or $\LFU$ update, which requires at most one upload. Thus, switches capture the main source of excess update cost, motivating our focus on the expert-switching cost.

\begin{align}
        U_T^{\cA}(\bm f) \triangleq \sum_{t=1}^{T-1} \Proba{E_{t+1}^{\cA}\neq E_t^{\cA}},
\end{align}
which counts the expected number of expensive transitions between the two virtual caches. Thus, besides the marginal probabilities $q_t^{\cA} \triangleq \Proba{E_t^{\cA}=\LRU}$, an algorithm is also characterized by the joint distribution of $(E_t^{\cA})_{t\in[T]}$.

A natural implementation is to sample $E_t^{\cA}$ independently at each time step according to $q_t^{\cA}$. However, this can lead to linear switching cost. For instance, if $q_t^{\cA}\simeq 1/2$, then $\Proba{E_{t+1}^{\cA}\neq E_t^{\cA}}\simeq \frac12$, and hence $U_T^{\cA}(\bm f)=~\Theta(T)$.
Instead, given a sequence of marginal probabilities $(q_t)_{t\in[T]}$, for instance produced by \texttt{Hedge}, we couple consecutive expert choices so as to preserve the marginals while minimizing the probability of switching. This is achieved by a maximal coupling of two Bernoulli random variables~\cite[Chapter~1.4]{thorisson_coupling_2000}. Specifically, the maximal-coupling sampler, denoted by $\MC$, satisfies
\begin{align}\nonumber 
\Proba{E^{\MC}_{t+1}=\LFU \mid E^{\MC}_{t}=\LFU} &= \frac{\min\{1-q_t,1-q_{t+1}\}}{1-q_t}, \\ \label{e:MC_sampler}
\Proba{E^{\MC}_{t+1}=\LRU \mid E^{\MC}_{t}=\LRU} &= \frac{\min\{q_t,q_{t+1}\}}{q_t},
\end{align}

\noindent Theorem~\ref{thm:maximalcoupling} shows that this construction preserves the marginals, and minimizes the switching cost. 
\begin{theorem}\label{thm:maximalcoupling}
For any request sequence $\bm f$ of length $T$ and any sequence of probabilities $(q_t)_{t\in[T]}$, 
\begin{align}
        \min_{\cA:\;\forall t, \; q_t^{\cA}=q_t} U_T^{\cA}(\bm f)= \sum_{t=1}^{T-1} |q_{t+1}-q_t|.
\end{align}
Moreover, this minimum is achieved by the maximal coupling sampler $\MC$ in~\eqref{e:MC_sampler}. 
\end{theorem}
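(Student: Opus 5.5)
The proof has two parts: a lower bound valid for every algorithm with the prescribed marginals, and a verification that $\MC$ has those marginals and attains the bound with equality.

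\textbf{Lower bound.} Fix any $\cA$ with $q_t^{\cA}=q_t$ for all $t$. For each $t$, the pair $(E_t^{\cA},E_{t+1}^{\cA})$ is a coupling of two Bernoulli variables with parameters $q_t$ and $q_{t+1}$, where ``success'' means selecting $\LRU$. Write $\mathds{1}\{E=\LRU\}$ for the indicator. Since
\[
\bigl|\mathds{1}\{E_{t+1}^{\cA}=\LRU\}-\mathds{1}\{E_t^{\cA}=\LRU\}\bigr| = \mathds{1}\{E_{t+1}^{\cA}\neq E_t^{\cA}\},
\]
Jensen's inequality (or the triangle inequality for expectations) gives
\[
\Proba{E_{t+1}^{\cA}\neq E_t^{\cA}} \ge \bigl|\E{\mathds{1}\{E_{t+1}^{\cA}=\LRU\}-\mathds{1}\{E_t^{\cA}=\LRU\}}\bigr| = |q_{t+1}-q_t|.
\]
This is the coupling inequality: any coupling disagrees with probability at least the total variation distance $|q_{t+1}-q_t|$. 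Summing over $t=1,\dots,T-1$ yields $U_T^{\cA}(\bm f)\ge\sum_t|q_{t+1}-q_t|$. The bound uses only the marginals, so the request sequence $\bm f$ plays no role beyond fixing the $q_t$.

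\textbf{$\MC$ preserves the marginals.} We argue by induction on $t$. Initialize $E_1^{\MC}=\LRU$ with probability $q_1$. Assume $\Proba{E_t^{\MC}=\LRU}=q_t$. By~\eqref{e:MC_sampler},
\[
\Proba{E_{t+1}^{\MC}=\LRU} = q_t\cdot\frac{\min\{q_t,q_{t+1}\}}{q_t} + (1-q_t)\left(1-\frac{\min\{1-q_t,1-q_{t+1}\}}{1-q_t}\right),
\]
which simplifies to $\min\{q_t,q_{t+1}\}+(1-q_t)-\min\{1-q_t,1-q_{t+1}\}$.

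We split into two cases. If $q_{t+1}\ge q_t$, the expression is $q_t+(1-q_t)-(1-q_{t+1})=q_{t+1}$. If $q_{t+1}<q_t$, it is $q_{t+1}+(1-q_t)-(1-q_t)=q_{t+1}$. Either way the marginal is $q_{t+1}$.

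The degenerate cases $q_t\in\{0,1\}$ make a conditional probability undefined, but the corresponding conditioning event has probability zero. We adopt the convention that such a conditional is set arbitrarily, for instance to $1$.

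\textbf{$\MC$ attains the bound.} The switching probability of $\MC$ at step $t$ is
\[
q_t\Bigl(1-\tfrac{\min\{q_t,q_{t+1}\}}{q_t}\Bigr)+(1-q_t)\Bigl(1-\tfrac{\min\{1-q_t,1-q_{t+1}\}}{1-q_t}\Bigr) = 1-\min\{q_t,q_{t+1}\}-\min\{1-q_t,1-q_{t+1}\}.
\]
Using the same two cases as before, this equals $|q_{t+1}-q_t|$. Summing over $t$ shows that $U_T^{\MC}(\bm f)$ matches the lower bound, which proves both the minimum value and that $\MC$ achieves it.

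The only delicate point is the degenerate-marginal convention; the rest is a direct calculation.
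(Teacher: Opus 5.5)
Your proof is correct and follows the same route as the paper. Both bound $\Proba{E_{t+1}^{\cA}\neq E_t^{\cA}}\ge|q_{t+1}-q_t|$ by a per-step Bernoulli coupling argument, sum over $t$, and observe that \MC{} attains this bound at every step. Your version is more self-contained: for the indicators $X,Y$ of selecting \LRU{}, you prove the bound directly via $\E{|X-Y|}\ge|\E{X-Y}|$ instead of citing Thorisson's maximal-coupling theorem (as in Claim~\ref{claim:maximal_coupling_Bernoulli}), and you check by induction that \MC{} preserves the marginals $q_t$, a step the paper leaves implicit.
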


The proof uses the following claim. 

\begin{claim}\label{claim:maximal_coupling_Bernoulli}
Let $X$ and $Y$ be Bernoulli random variables with success probabilities $x$ and $y$. Among all joint distributions with these marginals, the minimum possible mismatch probability, $\Proba{X\neq Y}$, is $|x-y|$. Moreover, this value is achieved by the maximal coupling, for which
\begin{align}\nonumber 
        \Proba{X=1,Y=1} &= \min\{x,y\},\\
        \Proba{X=0,Y=0} &= \min\{1-x,1-y\}.
\end{align}
\end{claim}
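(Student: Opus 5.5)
The plan is to parametrize every joint distribution of $(X,Y)$ with the prescribed marginals by a single number, and then read off the mismatch probability directly. Let $p_{11}=\Proba{X=1,Y=1}$. The marginal constraints force
\begin{align}\nonumber
\Proba{X=1,Y=0}=x-p_{11},\quad \Proba{X=0,Y=1}=y-p_{11},\quad \Proba{X=0,Y=0}=1-x-y+p_{11}.
\end{align}
Nonnegativity of these four entries is equivalent to $\max\{0,x+y-1\}\le p_{11}\le \min\{x,y\}$. This interval is nonempty, so it describes exactly the set of admissible couplings.

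Next, compute the mismatch probability as
\begin{align}\nonumber
\Proba{X\neq Y}=(x-p_{11})+(y-p_{11})=x+y-2p_{11}.
\end{align}
This is decreasing in $p_{11}$, so it is minimized by taking $p_{11}$ at its upper end, $p_{11}=\min\{x,y\}$. The minimum value is then $x+y-2\min\{x,y\}=|x-y|$, which gives the lower bound for every coupling.

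Finally, verify that this optimal choice is the maximal coupling stated in the claim. With $p_{11}=\min\{x,y\}$, we get
\begin{align}\nonumber
\Proba{X=0,Y=0}=1-x-y+\min\{x,y\}=1-\max\{x,y\}=\min\{1-x,1-y\},
\end{align}
which is exactly the second identity in the claim. One of the off-diagonal entries is zero, namely the one corresponding to the larger success probability failing to dominate, and the other equals $|x-y|$.

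There is no real obstacle here. The only points needing care are that the feasible interval for $p_{11}$ is nonempty and that the parametrization covers all couplings; both follow immediately because the marginals fix three of the four cells once $p_{11}$ is chosen.
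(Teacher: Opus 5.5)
Your proof is correct, but it takes a different and more elementary route than the paper. The paper computes nothing directly. It cites the coupling inequality from Thorisson (Thm.~4.1) to get the lower bound $1-\alpha=|x-y|$, with $\alpha=\sum_i\min\{p(i),q(i)\}$. It then builds the maximal coupling by the splitting construction: a Bernoulli$(\alpha)$ switch $I$, a common draw $V\sim r/\alpha$ on $\{I=1\}$, and residual draws $W_X,W_Y$ on $\{I=0\}$. It cites Thm.~4.2 for optimality, and reads off $\Proba{X=Y=1}=r(1)=\min\{x,y\}$ from the fact that the residual laws have disjoint supports.

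You instead use the fact that on a two-point space the couplings form a one-parameter family indexed by $p_{11}\in[\max\{0,x+y-1\},\min\{x,y\}]$. The problem then reduces to minimizing the linear function $x+y-2p_{11}$ over an interval. This buys several things:
\begin{itemize}
\item Your argument is self-contained and needs no external theorem.
\item It also shows that the optimal coupling is unique, since the objective is strictly decreasing in $p_{11}$. So the coupling with the stated diagonal entries is forced, not merely exhibited.
\item It avoids the degenerate case $x=y$. There $1-\alpha=0$ and the residual laws of $W_X,W_Y$ in the paper's construction are undefined.
\end{itemize}

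The paper's route is the one that generalizes. With more than two states the set of couplings is no longer one-dimensional, so your explicit parametrization does not carry over. The total-variation characterization and the splitting construction, however, apply unchanged to any finite state space. That is what the multi-expert extension mentioned in the conclusion would need.
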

The proof of the claim is Appendix~\ref{app:proofs}.

\begin{proof}[of Thm.~\ref{thm:maximalcoupling}]
Using Claim~\ref{claim:maximal_coupling_Bernoulli} for two Bernoulli random variables, $E_t$ and $E_{t+1}$, with success probabilities $q_t$ and $q_{t+1}$, we deduce that $(E_t^{\MC},E_{t+1}^{\MC})$ is a maximal coupling of these two distributions, i.e., $\Proba{E_t^{\MC} \neq E_{t+1}^{\MC}} = |q_{t+1}- q_{t}|$. Thus $\MC$ minimizes the switching cost.
\end{proof}

\begin{algorithm}[t]
\caption{\texttt{H-MC}: Hedge with maximal-coupling}
\label{alg:HMC}
\begin{algorithmic}[1]
\Require Request sequence $\boldsymbol f=(f_1,\dots,f_T)$, cache size $C$, learning rate $\eta>0$
\State $(\Delta_{0},q_1)\gets (0,0.5)$
\State Sample $E_1\sim \mathrm{Bernoulli}(q_1)$, where $E_1=\LRU$ w.p. $q_1$

\For{$t=1$ to $T$}
    \State $S_t\gets S_t^{E_t}$
    \State $\delta_t \gets \mathds{1}\{f_t\notin S_t^{\LRU}\} - \mathds{1}\{f_t\notin S_t^{\LFU}\}$
    \State $\Delta_t\gets \Delta_{t-1} + \delta_t$
    \State $q_{t+1}\gets \psi\left( \Delta_t \right)$ 
    \If{$\left(\delta_t= 1,E_t=\LRU\right)$}
            \State $E_{t+1}\gets \LFU$ w.p. $1-\frac{q_{t+1}}{q_t}$
    \ElsIf{$\left(\delta_t= -1, E_t=\LFU\right)$}
            \State $E_{t+1}\gets \LRU$ w.p. $\frac{q_{t+1}-q_t}{1-q_t}$
    \Else
        \State $E_{t+1}\gets E_t$
    \EndIf
\EndFor
\end{algorithmic}
\end{algorithm}

Combining \texttt{Hedge} with $\MC$ yields a caching algorithm that we call \texttt{H-MC}. This policy is a lazy implementation of \texttt{Hedge}. Instead of resampling an expert independently from the new \texttt{Hedge} distribution at every step, \texttt{H-MC} couples the choices as in~\eqref{e:MC_sampler}. Algorithm~\ref{alg:HMC} describes an implementation of~\texttt{H-MC}. More precisely, after observing request $f_t$, the algorithm computes
\begin{align}
 \delta_t = \mathds{1}\{f_t\notin S_t^{\LRU}\}-\mathds{1}\{f_t\notin S_t^{\LFU}\},
\end{align}
and sets $q_{t+1}$ as the probability of sampling an expert in \texttt{Hedge}, i.e., $q_{t+1}=\psi(\Delta_t)$, with $\Delta_t= M_{t}^{\LRU}-M_{t}^{\LFU}$.

If $\delta_t=0$, both experts incur the same loss, and therefore $q_{t+1}=q_t$. In this case the maximal coupling keeps the same expert with probability one. If $\delta_t=1$, then $\LRU$ misses while $\LFU$ hits, so the \texttt{Hedge} probability of $\LRU$ decreases, i.e., $q_{t+1}<q_t$. Hence, only when the current expert is $\LRU$ may a switch be needed; the algorithm sets $E_{t+1}^{\MC}=\LFU$ w.p. $1-\frac{q_{t+1}}{q_t}$. Similarly, if $\delta_t=-1$, then $\LRU$ hits while $\LFU$ misses, so $q_{t+1}>q_t$. In this case, only when the current expert is $\LFU$ may a switch be needed. The algorithm then sets $E_{t+1}^{\MC}=\LRU$ w.p. $\frac{q_{t+1}-q_t}{1-q_t}$.

Corollary~\ref{cor:HMC_regret_switching_cost} shows that \texttt{H-MC} has sublinear regret and a worst-case switching cost of~$\cO(\sqrt{T})$.

\begin{corollary}\label{cor:HMC_regret_switching_cost}
If $\eta=\cO(1/\sqrt{T})$, then $\mathcal{R}_{T}^{\HMC}= \cO(\sqrt{T})$ and $U_T^{\HMC}(\bm{f})=\cO(\sqrt{T})$, for any requests $\bm{f}$. 


\end{corollary}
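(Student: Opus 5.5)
The corollary has two parts, and we treat them separately. The regret bound comes from Hedge, since $\MC$ preserves its marginals. The switching bound comes from Theorem~\ref{thm:maximalcoupling} together with a telescoping argument on $\psi$.

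\textbf{Regret.} We first check that Algorithm~\ref{alg:HMC} realizes the coupling~\eqref{e:MC_sampler} with $q_t=\psi(\Delta_{t-1})$. Proceeding by induction on $t$, we show $\Proba{E_t^{\HMC}=\LRU}=q_t$. The base case holds because $q_1=1/2=\psi(0)$. For the inductive step, split on the sign of $\delta_t$, noting that $q_t$ and $q_{t+1}$ are deterministic given the oblivious sequence $\bm f$ (the virtual caches do not depend on the algorithm's randomness).
\begin{itemize}
\item If $\delta_t=0$, then $q_{t+1}=q_t$ and $E_{t+1}=E_t$.
\item If $\delta_t=1$, then $q_{t+1}<q_t$. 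Staying on $\LRU$ with probability $q_{t+1}/q_t$ and never leaving $\LFU$ gives $\Proba{E_{t+1}=\LRU}=q_t\cdot q_{t+1}/q_t=q_{t+1}$, which matches~\eqref{e:MC_sampler}.
\item If $\delta_t=-1$, the computation is symmetric.
\end{itemize}
Hence $\E{M_T^{\HMC}}=\sum_t \bigl(q_t\ell_t^{\LRU}+(1-q_t)\ell_t^{\LFU}\bigr)$, which is exactly Hedge's expected loss. The standard Hedge bound (Appendix~\ref{app:Hedge}) with two experts and losses in $[0,1]$ gives regret at most $\frac{\ln 2}{\eta}+\frac{\eta T}{8}$. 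Choosing $\eta=\Theta(1/\sqrt{T})$ yields $\mathcal R_T^{\HMC}=\cO(\sqrt T)$.

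\textbf{Switching cost.} By Theorem~\ref{thm:maximalcoupling}, $U_T^{\HMC}(\bm f)=\sum_{t=1}^{T-1}|q_{t+1}-q_t|=\sum_t|\psi(\Delta_t)-\psi(\Delta_{t-1})|$, where $|\Delta_t-\Delta_{t-1}|=|\delta_t|\le 1$. Since $|\psi'(x)|=\eta\,\psi(x)(1-\psi(x))\le \eta/4$, the mean value theorem gives $|q_{t+1}-q_t|\le \eta/4$. Summing over the $T-1$ terms gives $U_T^{\HMC}\le \eta T/4=\cO(\sqrt T)$ for $\eta=\cO(1/\sqrt T)$.

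The main subtlety is reconciling the two asymptotic requirements. The regret bound needs $\eta=\Theta(1/\sqrt T)$, since taking $\eta$ too small blows up the $\ln 2/\eta$ term, whereas the switching bound only needs $\eta=\cO(1/\sqrt T)$. The statement should therefore be read with $\eta=c/\sqrt T$ for a constant $c>0$. One could alternatively bound the switching cost by the total variation of $\psi$ along the path of $\Delta_t$, but the crude Lipschitz bound already suffices.
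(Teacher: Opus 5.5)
Your proposal is correct and takes the same route as the paper. Marginal preservation by $\MC$ transfers the \texttt{Hedge} regret bound to $\HMC$, and Theorem~\ref{thm:maximalcoupling} together with the $\eta/4$-Lipschitz bound on $\psi$ gives $U_T^{\HMC}(\bm f)\le \eta T/4$. Your induction checking that Algorithm~\ref{alg:HMC} actually realizes~\eqref{e:MC_sampler}, and your remark that the regret part needs $\eta=\Theta(1/\sqrt T)$ rather than merely $\eta=\cO(1/\sqrt T)$, are both correct refinements that the paper's proof glosses over.
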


\begin{proof}
 Using Thm.~\ref{thm:maximalcoupling}, $\MC$ is maximal coupling and thus it preserves the marginal probabilities, i.e.,\\
 $\Proba{E_t^{\MC} = \LRU}=~q_t^{\texttt{Hedge}}$. It follows that $\HMC$ inherits the sublinear regret guarantees of $\Hedge$, i.e., $\mathcal{R}_{T}^{\HMC}= \cO(\sqrt{T})$. The same theorem also computes the corresponding switching cost. Moreover, 
 \begin{align}\nonumber 
        |q_{t+1}^{\Hedge} - q_t^{\Hedge}| 
&= |\psi(\Delta_t) - \psi(\Delta_{t-1})| \\ 
&\leq \frac{\eta}{4} |\Delta_t - \Delta_{t-1}| \leq \frac{\eta}{4}.
 \end{align}
 We used above the fact that the derivative of $\psi$ is equal to $\eta \exp(\eta x)/(1+\exp(\eta x))^{2}$, which is smaller than $\eta/4$. This proves that the switching cost is $\cO(\sqrt{T})$ for any $\bm{f}$.
\end{proof}

More generally, there is extensive literature on minimizing the switching cost in the experts problem, including \texttt{Shrinking Dartboard}, a lazy variant of \texttt{Hedge}~\cite{geulen_regret_2010}, and lazy variants of \texttt{Follow the Perturbed Leader}~\cite{kalai_efficient_2005,devroye_prediction_2013}. We instead exploit the fact that there are two experts, enabling us to construct a maximal coupling between consecutive selections and minimize the switching cost. 



To further reduce the switching cost, one may also use simple schemes such as batching decisions, i.e., resampling the expert only once every $B$ requests, combined with sampling the request sequence with some probability $p$. Although such schemes preserve sublinear regret, they worsen the regret bound by a factor~$\cO(\sqrt{B/p})$~\cite{ben_mazziane_efficient_2026}. Appendix~\ref{app:update_miss_cost} evaluates the upload cost and the miss count of $\HMC$ when augmented with these modifications.

Finally, although $\HMC$ minimizes the switching cost, it may still be undesirable to incur a large upload cost in a single step when the selected expert changes. To mitigate this issue, observe that any algorithm $\cA$ satisfying
\begin{align}\label{Eqmarginal}
    \Proba{ i\in S^{\cA}_t }=
    \begin{cases}
        1, & i \in S^{\LRU}_{t} \cap S^{\LFU}_{t},\\
        q_t^{\Hedge}, & i\in S^{\LRU}_{t} \setminus S^{\LFU}_{t},\\
        1-q_t^{\Hedge}, & i\in S^{\LFU}_{t} \setminus S^{\LRU}_{t},\\
        0, & \text{otherwise},
    \end{cases}
\end{align}
has the same expected loss as \texttt{Hedge} in \eqref{e:hedge_update}. This observation allows for more flexible implementations. For example, instead of selecting one entire virtual cache, one can construct a randomized cache by coupling the files in $S_t^{\LRU}\setminus S_t^{\LFU}$ with those in $S_t^{\LFU}\setminus S_t^{\LRU}$ and storing exactly one file from each pair, with probabilities $q_t^{\Hedge}$ and $1-q_t^{\Hedge}$, respectively. Such an implementation keeps the same expected regret guarantee as \texttt{Hedge}, while avoiding the need to upload an entire virtual cache in one shot.



%

\section{Conclusion}
\label{s:conclusion}

We showed that the $\LeCar$-style approach to combining $\LRU$ and $\LFU$ has linear regret with respect to the better of the two policies. We then proposed a \texttt{Hedge}-based algorithm for a suitable experts formulation, achieving sublinear regret while minimizing the switching cost through a maximal coupling of consecutive selections. In future work, we would like to investigate whether the maximal-coupling idea can be extended to design efficient lazy no-regret algorithms for the general experts problem. We would also like to design caching policies with sublinear regret guarantees under a more practical upload cost metric, rather than using switching cost as a proxy. 

\section{Acknowledgments}

The authors thank the anonymous reviewers of IFIP Performance for their insightful comments, which helped improve the paper. YB also thanks Isidoor Pinillo Esquivel for a helpful discussion during a visit to Inria, and Giovanni Neglia and Sara Alouf for hosting the visit. XZou is supported by ELLIIT and the Knut and Alice Wallenberg Foundation.


\onecolumn 
\appendix

\section{LeCar}
\begin{algorithm}
\algrenewcommand\algorithmicrequire{\textbf{Input:}}
\algrenewcommand\algorithmicensure{\textbf{Output:}}
\caption{LeCaR-style update with LRU and LFU experts}
\label{alg:lecar_update}
\begin{algorithmic}[1]
\Require Request sequence $\boldsymbol f=(f_1,\dots,f_T)$, cache size $c$, initial cache content $S_0$, initial recency list $R_0$, learning rate $\eta>0$, discount factor $d\in(0,1]$, history size $k$
\Ensure Cache states $S_1,\ldots, S_T$. 
\State $\mathcal E\gets\{\LRU,\LFU\}$
\State $N_0\gets \{ (i,0): \; i\in S_0 \}$, \Comment{Initial $\LFU$ counters}
\State $(w_0^{e}, H_0^e) \gets (1, \emptyset ) \; \forall e\in \mathcal{E} $ \Comment{Initial expert weight and history}

\For{$t=1$ to $T$}
\Comment{Increment only the requested item's $\LFU$ counter}
    \If{$f_t\in S_{t-1}$}
        \State $(S_t, w_t^e, H_t^{e})\gets (S_{t-1}, w_{t-1}^e, H_{t-1}^{e})$
        \State $R_t\gets \operatorname{MRU}(R_{t-1},f_t)$ \Comment{Move $f_t$ to most recent position}
        \State $N_t[i]\gets N_{t-1}[i]+\mathds{1}\left( i=f_t\right)$, $\forall i\in N_{t-1}$ 
        
    \Else
        
        \State $\ell_t^e \gets  \mathds{1}\left(f_t\in H_{t-1}^e\right) d^{t-H_{t-1}^e[f_t]}$, for all $e$
        \State $w_t^e\gets w_{t-1}^e\exp(-\eta \ell_t^e)$, $\forall e$
        \Comment{Penalize the expert whose history contains $f_t$}
        \State $p_t^e\gets \frac{w_t^e}{\sum_{e'\in\mathcal E} w_t^{e'}}$, $\forall e$
        \State Sample $E_t\sim \boldsymbol{p}_t$
        \Comment{Choose eviction expert}
         \State $v_t^{\mathrm{LRU}}\gets \operatorname{first}(R_{t-1})$
        \Comment{$\LRU$ recommendation}
        \State $v_t^{\mathrm{LFU}}\gets
        \arg\min_{i\in S_{t-1}}
        \left(N_{t-1}(i)\right)$ (ties broken by $\LRU$)
        \Comment{$\LFU$ recommendation}
       
        \State $S_t\gets S_{t-1}\setminus\{v_t^{E_t}\}\cup\{f_t\}$
        \Comment{Evict $v_t^{E_t}$ and insert $f_t$}

        \State $R_t\gets \mathrm{MRU}(R_{t-1}\setminus\{v_t^{E_t}\},f_t)$
        \Comment{Update recency list}
        \State $N_t \gets N_{t-1} \cup \{ (f_t,1) \} \setminus \{ v_t^{E_t}, N_{t-1}[v_t^{E_t}]\}$. 
        \State $H_t^e\gets H_{t-1}^e\setminus\{f_t\}$, $\forall e$
        \Comment{Remove requested item from the history}
        \State $H_t^{E_t}\left[v_t^{E_t}\right] =t$  
        \Comment{Insert evicted item with timestamp $t$}
         \State Evict from $H_t^{E_t}$ the item with the oldest timestamp when $|H_t^{E_t}|\geq k$

    \EndIf
\EndFor
\end{algorithmic}
\end{algorithm}

\section{Proofs}
\label{app:proofs}
\begin{proof}[of Claim~\ref{claim:maximal_coupling_Bernoulli}]
Let $p$ and $q$ denote the distributions of $X$ and $Y$ on $\{0,1\}$, i.e., $p(1)=x$, $p(0)=1-x$, $q(1)=y$, and $q(0)=1-y$. Define $r(i) \triangleq \min\{p(i),q(i)\}$, $\forall i \in \{0,1\}$ and $ \alpha \triangleq r(0)+r(1)$. \cite[Thm. 4.1]{thorisson_coupling_2000} implies that $1- \alpha= |x-y|$ is the minimum possible mismatch probability.  

We now use the maximal coupling construction from~\cite[Chap. 1, Sec.4]{thorisson_coupling_2000}. Let $I$ be a Bernoulli random variable with parameter $\alpha$. If $I=1$, $X=Y=V$, where $V$ in a random variable on $\{0,1\}$ with distribution
\begin{align}
 \Proba{V=i}=\frac{r(i)}{\alpha}, \forall i\in \{0,1\}.
\end{align}
Otherwise, $X=W_X$ and $Y=W_Y$ such that the distributions of $W_X$ and $W_Y$ are given by, 
\begin{align}
        \Proba{W_X=i} = \frac{p(i)-r(i)}{1-\alpha}
       ,\; 
        \Proba{W_Y=i}= \frac{q(i)-r(i)}{1-\alpha}.
\end{align}
\cite[Thm. 4.2]{thorisson_coupling_2000} shows that this construction minimizes the mismatch probability, i.e., $\Proba{X\neq Y} = |x-y|$. 

Note that $W_X$ and $W_{Y}$ have disjoint supports, since for each $i\in\{0,1\}$, at most one of $p(i)-r(i)$ and $q(i)-r(i)$ is positive. Hence, conditioned on $I=0$, we have $X\neq Y$. Therefore,
\begin{align}
\Proba{X=Y=1}=\Proba{I=1, V=1}= \frac{\alpha r(i)}{\alpha} = \min(x,y). 
\end{align}
Similar arguments hold for $X=Y=0$, which finishes the proof. 
\end{proof} 

\section{Hedge} 
\label{app:Hedge}
In the standard experts problem, each expert \(e\in\mathcal{E}\) incurs a loss \(\ell_t^e\in[0,1]\) at time \(t\). Let \(M_t^e=\sum_{s=1}^t \ell_s^e\) denote its cumulative loss. The \texttt{Hedge} algorithm selects expert \(e\) at time \(t\) with probability
\[
p_t^e
=\frac{\exp(-\eta M_{t-1}^e)}
        {\sum_{e'\in\mathcal{E}}\exp(-\eta M_{t-1}^{e'})},
\]
and therefore incurs expected loss \(\ell_t(\texttt{Hedge})=\sum_{e\in\mathcal{E}}p_t^e\ell_t^e\). For any loss sequence, its regret satisfies
\[
R_T^{\texttt{Hedge}}
\triangleq
\sum_{t=1}^T \ell_t(\texttt{Hedge}) -\min_{e\in\mathcal{E}}M_T^e
\leq \frac{\log|\mathcal{E}|}{\eta}+\frac{\eta T}{8}.
\]
Thus, choosing \(\eta=\sqrt{8\log|\mathcal{E}|/T}\) gives \(R_T^{\texttt{Hedge}}=\mathcal{O}(\sqrt{T\log|\mathcal{E}|})\), which is minimax optimal up to constant factors~\cite[Thm.~2.2 and Sec.~3.7]{cesa-bianchi_prediction_2006}.

\section{Simulations: Upload cost and Miss ratio}
\label{app:update_miss_cost}
We compare the miss ratio of \texttt{H-MC} with existing policies $\LeCar$, $\LRU$, $\LFU$ and \texttt{Hedge} (independent sampling of the expert at each step), using synthetic traces. To quantify the benefit of maximal coupling, we compare the upload costs of \texttt{H-MC} and \texttt{Hedge}, where the upload cost of policy $\mathcal{A}$ is measured by the average number of new files that must be replaced into the cache between consecutive time steps, denoted by
\begin{equation}
    C^{\mathcal{A}}_T = \frac{1}{T}\sum_{t=1}^{T-1}|S^{\mathcal{A}_{t+1}} \backslash S^{\mathcal{A}}_t |.
\end{equation}

\subsection{Traces}

\textbf{Zipf trace.} Requests are generated independently from a catalog of $N=1000$ items according to a Zipf distribution with exponent $\alpha=1.1$. More precisely, $\Proba{f_t = i} \propto i^{-\alpha}$.
The trace contains $T=100,000$ requests.

\textbf{Scan/churn trace.} This non-stationary trace alternates between sequential scans and churns over changing working sets. In each cycle, the scan contains $3C$ (cache size $C$) previously unseen objects, requested once each. It is followed by a churn phase containing $2C$ new objects, which are requested repeatedly four times in randomly shuffled order. Thus, for $C=100$, each cycle contains $1,100$ items. We use $100$ cycles, yielding $T=110,000$.

\textbf{Markov-modulated trace} Requests alternate between frequency and recency oriented regimes according to a symmetric two-state Markov chain $Z_T\in \{0,1\}$ with $\Proba{Z_{t+1}=1 | Z_{t}=0}=\Proba{Z_{t+1}=0 | Z_{t} = 1} = 1-p_s$, where $p_s = 0.002$. Hence, each regime persists for an average of $\frac{1}{p_s} =500$ requests. Conditional on $Z_t=0$, requests follow a fixed Zipf distribution with exponent $\alpha=1.1$ over $20C = 2000$ items. Conditional on $Z_t=1$, requests are uniformly drawn from a set of $C=100$ items. At each request in this regime, the active set is replaced by a new set of $C$ objects with probability $0.03$, creating short-term temporal locality and making previously accumulated popularity information less useful. The resulting trace therefore alternates between a stationary, frequency-oriented workload and a dynamically changing, recency-oriented workload. We use $T=100,000$ requests.

\textbf{Adversarial trace.} We also evaluate the periodic adversarial construction of Theorem 3.2 for cache capacity $C=8$. The construction contains $C-2$ persistent objects together with the two dynamically evolving cache positions used in the counterexample. We use $2000$ phases, resulting in a trace of $T=60015$ requests.

\textbf{Perturbed Adversarial trace.} To assess robustness of the counterexample to random perturbations, we independently replace each request in the adversarial construction with probability $\epsilon=0.02$. A replacement is drawn uniformly from a background pool of $4C=32$ objects. The phase-specific objects of the original construction remain fresh across phases. This produces a noisy version of the theoretical trace while preserving its underlying phase structure. This trace allows us to test whether the  behavior of $\LeCar$ identified theoretically persists when the adversarial construction is subject to small random perturbations and repeated content. We use $2000$ phases, resulting in a trace of $T=60015$ requests.

\begin{table*}[h]
\centering
\caption{Average miss ratio (\%). }
\label{tab:baseline-miss}
\setlength{\tabcolsep}{5pt}
\begin{tabular}{l c c c c c c c}
\toprule
Workload
& $C$
& \texttt{LRU}  
& \texttt{LFU} 
& \texttt{LeCaR} 
& \texttt{H-MC} 
& \texttt{Hedge} 
\\
\midrule

Zipf
& 100
& 32.5990
& \textbf{26.6590}
& 31.5630
& 26.7724
& 26.7495
\\

Markov
& 100
& \textbf{60.2090}
& 67.7440
& 60.1229
& 60.3245
& 60.3035
\\

Scan/churn
& 100
& \textbf{91.5018}
& 99.8164
& 91.5028
& 91.6440
& 91.5921
\\

Adversarial
& 8
& \textbf{10.0108}
& \textbf{10.0108}
& 10.7487
& 10.0180
& 10.0051
\\

Perturbed adversarial
& 8
& 14.1398
& \textbf{11.7571}
& 12.8473
& 11.8584
& 11.8739
\\
\bottomrule
\end{tabular}
\end{table*}

\begin{figure}[H]
    \centering
    \includegraphics[width=0.8\linewidth]{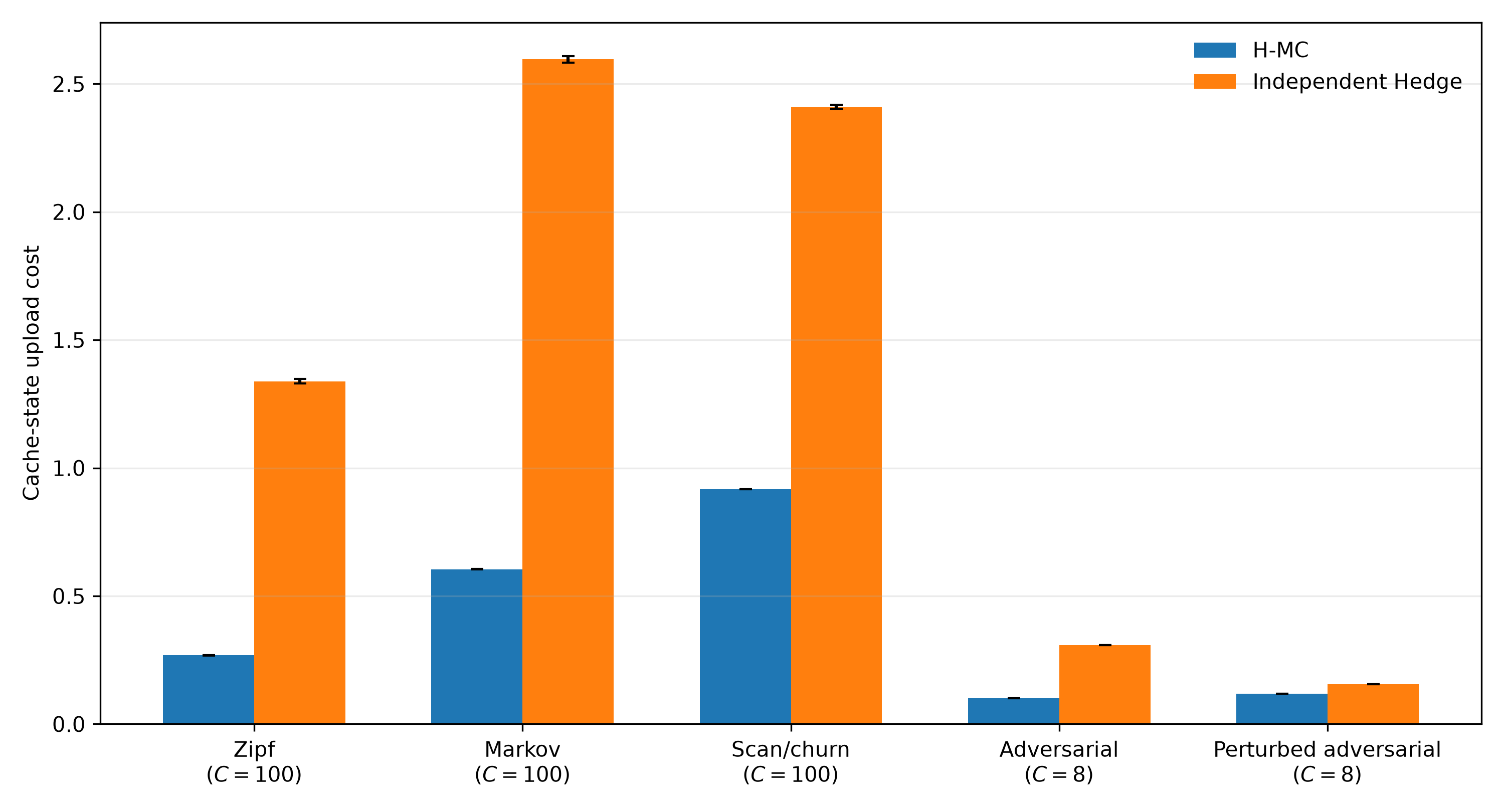}
    \caption{Upload Cost of \texttt{H-MC} and \texttt{Hedge}}
    \label{FiguploadCost}
\end{figure}

\subsection{Experiment Setup and Results}

The baseline evaluation includes five synthetic workloads. $\LRU$ and $\LFU$ are deterministic, whereas $\LeCar$, \texttt{H-MC}, and \texttt{Hedge} are evaluated over 30 random seeds. For \texttt{H-MC} and \texttt{Hedge}, the learning rate is $\eta=\sqrt{8\log 2/T}$. For $\LeCar$, we use $\eta=0.45$, history size $C$, and discount factor $d=0.005^{1/C}$; the same configuration used in the original paper~\cite[Section 3]{vietri_driving_2018}.

Table~\ref{tab:baseline-miss} reports the average miss-ratio of the considered algorithms under the five traces. \texttt{H-MC} has a lower average miss ratio than the evaluated $\LeCar$ implementation on Zipf and both adversarial constructions. However, $\LeCar$ has a lower average miss ratio on Markov-modulated and scan/churn workloads.



Figure~\ref{FiguploadCost} compares the upload cost of \texttt{H-MC} and \texttt{Hedge}. Error bars show $95\%$ normal-approximation confidence intervals across randomized runs. Across the five traces, the maximal coupling feature yields savings in the update cost ranging from $23.6\%$ to $80.0\%$. At the same time, the two methods exhibit nearly identical miss performance: the largest observed difference between their average miss ratios is $0.052\%$. This is consistent with \texttt{H-MC} preserving the \texttt{Hedge} marginal expert-selection probabilities while changing only the coupling between consecutive selections.


\bibliography{references}

\end{document}